\def\BibTeX{{\rm B\kern-.05em{\sc i\kern-.025em b}\kern-.08em
    T\kern-.1667em\lower.7ex\hbox{E}\kern-.125emX}}
\newtheorem{theorem}{Theorem}
\newcommand{\m}{\mathbf{m}}
\newcommand{\X}{\mathbf{X}}
\newcommand{\x}{\mathbf{x}}
\newcommand{\y}{\mathbf{y}}
\newcommand{\z}{\mathbf{z}}
\begin{document}

\title{Multiple Imputation via Generative Adversarial Network for High-dimensional Blockwise Missing Value Problems
\thanks{This work is partly supported by NIH grant R01GM124111.}
}

\author{\IEEEauthorblockN{ Zongyu Dai}
\IEEEauthorblockA{\textit{Department of AMCS} \\
\textit{University of Pennsylvania}\\
Philadelphia, USA \\
daizy@sas.upenn.edu}
\and
\IEEEauthorblockN{Zhiqi Bu}
\IEEEauthorblockA{\textit{Department of AMCS} \\
\textit{University of Pennsylvania}\\
Philadelphia, USA \\
zbu@sas.upenn.edu}
\and
\IEEEauthorblockN{ Qi Long}
\IEEEauthorblockA{\textit{Division of Biostatistics} \\
\textit{University of Pennsylvania}\\
Philadelphia, USA \\
qlong@upenn.edu}
}

\maketitle

\begin{abstract}
Missing data are present in most real world problems and need careful handling to preserve the prediction accuracy and statistical consistency in the downstream analysis. As the gold standard of handling missing data, multiple imputation (MI) methods are proposed to account for the imputation uncertainty and provide proper statistical inference. 

In this work, we propose Multiple Imputation via Generative Adversarial Network (MI-GAN), a deep learning-based (in specific, a GAN-based) multiple imputation method, that can work under missing at random (MAR) mechanism with theoretical support. MI-GAN leverages recent progress in conditional generative adversarial neural works and shows strong performance matching existing state-of-the-art imputation methods on high-dimensional datasets, in terms of imputation error. In particular, MI-GAN significantly outperforms other imputation methods in the sense of statistical inference and computational speed.
\end{abstract}

\begin{IEEEkeywords}
GAN, neural network, missing data imputation, multiple imputation, missing at random
\end{IEEEkeywords}

\section{Introduction}
Missing values are common in almost all real datasets and they have a far-reaching impact on the data analysis. For example, integrated data from multiple sources are often analyzed in areas such as the financial analysis and the biomedical research. Since each source only collects a subset of features for its samples, and different sources may have different subsets of features, the blockwise missing data often arise and pose challenges in the downstream analysis. As a concrete example, consider 4 hospitals that collect the test results related to a certain disease (see \Cref{table:4 pattern}). While the first hospital can run all the tests for its patients, the second hospital can only run the first 4 tests; the third hospital can only run the first 3 tests and the fourth hospital is capable of running all but the 4th test. After integrating all the patient data across the hospitals, the final dataset contains blockwise missing values.

To deal with this blockwise missing data pattern, it is not sufficient to do the complete case analysis (which discards all samples with missing values and often leads to improper inference and biased findings in the subsequent analysis). Instead, we apply imputation methods to fill in the missing values and conduct inference on the imputed datasets for better accuracy and proper statistical inference. Generally speaking, different imputation methods are proposed to work under different missing mechanisms from which the missing data are generated, which include missing completely at random (MCAR), missing at random (MAR), and missing not at random (MNAR). To be specific, MCAR means that the missing probabilities for each entry (or sample) are the same, independent of the values; MAR means that the missing probabilities depend on the observed values, but not on the missing values; MNAR means that the missing probabilities can depend on both the observed and the missing values. In practice, MCAR is the easiest setting where all methods are supposed to work. MNAR is the most difficult setting where no imputation methods work provably without additional structure assumptions. In this work, we develop MI-GAN, a valid imputation method that works on the MAR (and MCAR) mechanism with the theoretical support.

There are two classes of imputation methods, depending on whether the missing data are imputed for one or multiple times, which are referred to as the single imputation (SI) and the multiple imputation (MI), respectively \cite{little2019statistical}. Single imputation methods, such as the matrix completion \cite{mazumder2010spectral,srebro2004maximum,hastie2015matrix,chen2019inference}, often underestimate the uncertainty of the imputed values and cause bias in the downstream analysis. In comparison, multiple imputation methods, such as MICE \cite{van2007multiple,buuren2010mice,deng2016multiple,zhao2016multiple} and our MI-GAN, can overcome this shortage by adequately accounting for the uncertainty of imputed values through the Rubin's rule \cite{little2019statistical}.

In terms of the learning models, state-of-the-art imputation methods are generally categorized into chained equation-based methods \cite{van2007multiple,buuren2010mice,deng2016multiple,zhao2016multiple}, random forest-based methods \cite{stekhoven2012missforest}, joint modeling \cite{schafer1997analysis,garcia2010pattern,liang2018imputation,zhao2020missing}, matrix completion \cite{mazumder2010spectral,srebro2004maximum,hastie2015matrix,chen2019inference}, and deep learning-based methods \cite{vincent2008extracting,gondara2018mida,ivanov2018variational,mattei2019miwae,yoon2018gain,zhang2018medical,li2019misgan,lee2019collagan}. Chained equation-based methods including MICE are arguably the most popular imputation methods due to their practically stable imputation performance in the low dimensional setting. Additionally, MICE is generally regarded as an MAR method despite its lack of theoretical guarantees. However, chained equation-based methods can be extremely time-consuming in the high-dimensional setting and their performance often deteriorates significantly as the feature dimension increases. Similarly, MissForest \cite{stekhoven2012missforest} is a popular random forest-based imputation method that suffers from the same issue as MICE. Joint modeling-based methods often assume data are generated from Gaussian distribution and they usually have solid theoretical guarantees under MAR mechanism. Nevertheless, joint modeling-based methods' performance also deteriorates significantly in high dimension or when their assumptions are violated in practice. Matrix completion methods, such as SoftImpute \cite{mazumder2010spectral}, conduct the single imputation based on the low-rank assumption and hence usually lead to improper inference. Recently, many deep learning-based imputation methods have been proposed, such as GAIN \cite{yoon2018gain} and optimal transport-based methods \cite{muzellec2020missing}. Specifically, GAIN is a novel multiple imputation method that does not assume the existence of the complete cases. On one hand, GAIN may empirically work for some datasets under MCAR and MAR mechanisms. On the other hand, GAIN is only theoretically supported under MCAR mechanism, unlike our MI-GAN which is supported under MAR. Optimal transport-based methods including the Sinkhorn and Linear RR (both from \cite{muzellec2020missing}) have shown empirical outerperformance over other imputation methods under MCAR, MAR and MNAR mechanism, yet the strong performance no longer holds true in the high dimension. 

\textbf{Our contribution:} In this paper, we propose two novel GAN-based multiple imputation methods, namely $\text{MI-GAN}_1$ and $\text{MI-GAN}_2$, which can work for high dimensional blockwise pattern of missing data with a moderate sample size. We highlight that $\text{MI-GAN}_1$ is equipped with theoretical guarantees under the MAR mechanism. Importantly, we further propose $\text{MI-GAN}_2$ to boost the empirical performance through an iterative training that leverages all the cases, in constrat to $\text{MI-GAN}_1$ which only utilizes the complete cases. Extensive synthetic and real data experiments demonstrate that MI-GANs outerperform other state-of-the-art imputation methods in statistical inference, computational speed, and scalability to high dimension, and perform comparably in terms of imputation error.













\section{MI-GAN}

\begin{figure}

  \centering
  \includegraphics[width=\linewidth,height=4cm]{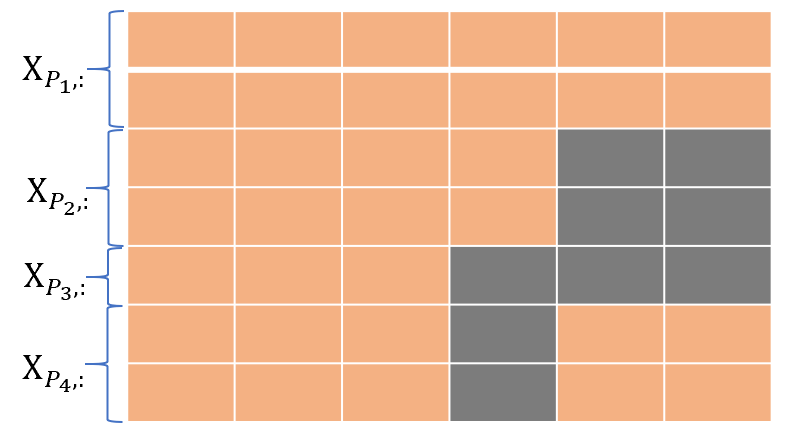}
  \caption{Multivariate 4-pattern missing data. Orange squares represent observed data and gray squares represent missing data.}
 \label{table:4 pattern}
\vspace{-0.3cm}
\end{figure}

We start with a description of multivariate-$K$ pattern missing data. 
We assume there are $n$ samples/cases, each containing $p$ features/variables which are possibly missing. Throughout this paper, we consider high dimensional settings, i.e., $p>n$. Let matrix $\X \in \mathbb{R}^{n\times p}$ denote the data matrix and $\X_{i,j}$ represents the value of the $j$-th variable for the $i$-th sample. Additionally, $\X_{i,:}$ and $\X_{:,j}$ stand for the $i$-th row vector and the $j$-th column vector, respectively. It is true that any missing data can be pre-processed and grouped into $K$ patterns $\X_{P_k,:}$ for $k\in[K]$, where the samples within each pattern have the same observed and missing features denoted by the index set $\textbf{obs}(k)$ and the index set $\textbf{mis}(k)$, respectively. Here $P_k$ is the \textit{index set} for the rows in $\X$ which belong to the $k$-th pattern. Without loss of generality, we let $\X_{P_1,:}$ denote the set of complete cases for which all features are observed. Furthermore, we define $\X_{P_{-k},:}=\X\backslash\X_{P_k,:}$ as the complement data matrix for $\X_{P_k,:}$. See the example in \Cref{table:4 pattern}, where the incomplete matrix contains $7$ samples which can be grouped into $4$ patterns. The samples in the first pattern are all complete and remaining samples contain missing values. Here the pattern index sets are $P_{1}=\{1,2\}$, $P_{2}=\{3,4\}$, $P_{3}=\{5\}$, $P_{4}=\{6,7\}$, and observed feature index sets are $\textbf{obs}(1)=\{1,2,3,4,5,6\}$, $\textbf{obs}(2)=\{1,2,3,4\}$, $\textbf{obs}(3)=\{1,2,3\}$, $\textbf{obs}(4)=\{1,2,3,5,6\}$.  

Without causing confusion, we let $\x_{\textbf{obs}(k)}$ and $\x_{\textbf{mis}(k)}$ denote the observed variables and the missing variables of the $k$-th pattern. Additionally, we define $K$ \textit{mask vectors} $\m_k \in \mathbb{R}^p$ for each pattern $k \in [K]$: $\m_k(j)=1$ if $j\in \textbf{obs}(k)$ otherwise $\m_k(j)=0$.

\subsection{$\text{MI-GAN}_1$:Direct Imputation} \label{MIGAN1}

Here our goal is to impute the missing values in each pattern. In particular, we aim to generate imputed values from $f(\x_{\textbf{mis}(k)}|\x_{\textbf{obs}(k)})$, the conditional distribution of missing variables given observed variables in the $k$-th pattern. At the high level, $\text{MI-GAN}_1$ is an ensemble of $(K-1)$ GANs which are composed of $(K-1)$ pairs of generators and discriminators, and are trained only on complete cases (which belong to the first pattern). Each GAN is used to model one conditional distribution $f(\x_{\textbf{mis}(k)}|\x_{\textbf{obs}(k)})$. \Cref{migan1} shows the architecture of our $\text{MI-GAN}_1$. The details of $\text{MI-GAN}_1$ are described as follows.

\begin{figure*}[hbt!]
\centering
\includegraphics[width=14cm]{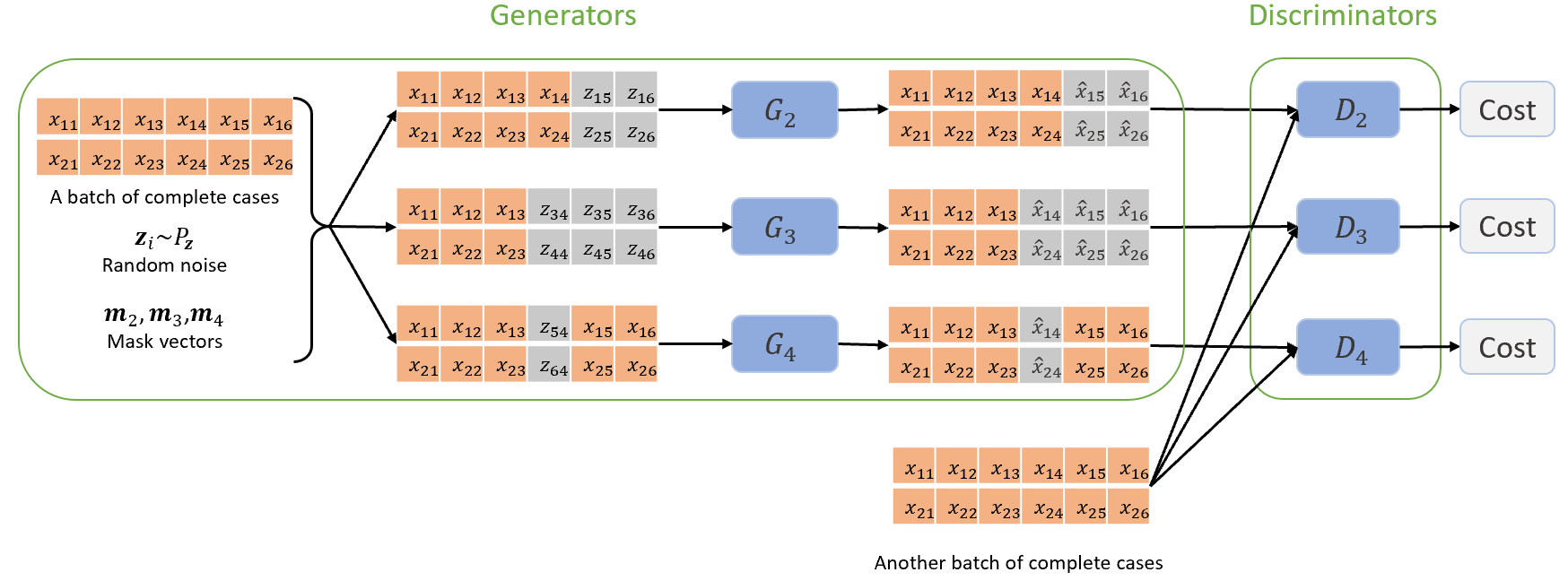}
\caption{$\text{MI-GAN}_1$ applied to the four-pattern missing data in \Cref{table:4 pattern}}
\label{migan1}
\end{figure*}

\textbf{Generator $G_k$:} The $k$-th generator $G_k$ is designed to impute the missing values in the $k$-th pattern. Let $\x$ denotes a complete case in $\X_{P_1,:}$ and $\z$ denotes an independent $p$-dimensional noise from $\mathcal{N}(0,\mathbf{I})$. Then $G_k: \mathbb{R}^p\times \mathbb{R}^p\times\{0,1\}^p\to \mathbb{R}^p$ is a function which takes complete case $\x$, noise $\z$ and mask $\m_k$ as input and outputs a vector of imputations. Here the basic idea is to replace the values of $\x$ in the covariate set $\textbf{mis}(k)$ with a random noise, then feed this partially-true noisy data into the generator to obtain a high-quality imputation. Specifically, for $\text{MI-GAN}_1$, the generator $G_k$ entails two steps: 
\begin{itemize}
    \item $\widehat{G}_k(\x,\z,\m_k) = \widehat{G}_k(\x\odot\m_k +\z\odot(1-\m_k))$ is vector of length $p$, where $\odot$ denotes the element-wise multiplication and  $\widehat{G}_k$ is the generator network that outputs a value for every covariate, even its value was observed.

\item Replace $\widehat{G}_k(\x\odot\m_k +\z\odot(1-\m_k))$ at covariate set $\textbf{obs}(k)$ with true values to ensure that the observed values are intact and we only impute the missing values. Hence the output of generator is $G_k(\x,\z,\m_k)=\widehat{\x}=\x\odot\m_k +\widehat{G}_k(\x,\z,\m_k)\odot(1-\m_k)$
\end{itemize}

\textbf{Discriminator $D_k$:} 
Denote the output distribution of $k$-th generator $G_k$ as $P_k$ and the distribution of complete cases as $P$. Then discriminator $D_k$ takes $\widehat{\x}\sim P_k$ for $k \in [K]$ and $\x \sim P$ as input. The design of the discriminator depends on the training algorithm used. Here we use Wasserstein GAN \cite{arjovsky2017wasserstein,gulrajani2017improved} framework to train $\text{MI-GAN}_1$. Hence the discriminator $D_k: \mathbb{R}^{p} \to \mathbb{R}$ is a $1$-Lipschitz function which estimates the Wasserstein-1 distance between $P$ and $P_k$ using $\widehat{\x}$ and $\x$.

\textbf{Objective Function:} We train the discriminator $D_k$ to estimate the Wasserstein-1 distance (Earth-Mover distance) between $P_k$ and $P$, and we simultaneously train the generator $G_k$ to minimize this distance. For the $k$-th pattern, we consider the following objective function,
\begin{align*}
     L(D_k,G_k) &= \mathbb{E}_{\widehat{\x}\sim P_k}[D_k(\widehat{\x})]-\mathbb{E}_{\x\sim P}[D_k(\x)].
\end{align*}


Then, we ensemble the losses of every GAN with equal weights,
\begin{align*}
    L(D_2,G_2,\cdots,D_K, G_K) = \sum_{k=2}^K L(D_k,G_k).
\end{align*}

Hence the objective of $\text{MI-GAN}_1$ is the minimax problem given by 
\begin{align}
\label{eq: loss}
\min_{G_2,\cdots,G_K} \max_{D_2,\cdots,D_K} L(D_2,G_2,\cdots,D_K, G_K).
\end{align}


\subsection{Theoretical Properties of $\text{MI-GAN}_1$} \label{theorectical properties}

We provide a theoretical analysis of \Cref{eq: loss} by considering a simplified setting. Let $R_k$ denotes the dummy variable for the $k$-th pattern ($k\in [K]\backslash \{1\}$). Hence $R_k=1$ with $R_j=0$ (for $\forall j\in [K]\backslash \{1\}$ and $j\neq k$) means that this case belongs to the $k$-th pattern ($k\in [K]\backslash \{1\}$), and $R_k=0$ (for $\forall k\in [K]\backslash \{1\}$) means this case is complete. 


We denote $\textbf{obs}=\cap_{k=1}^K \{\textbf{obs}(k)\}$ and $\textbf{mis}=\cup_{k=1}^K \{\textbf{mis}(k)\}$ as the set of commonly observed variables and the set of possibly missing variables across all patterns, respectively, and suppose $\textbf{obs}\neq \varnothing$. For the example in \Cref{table:4 pattern},  $\textbf{obs}=\{1,2,3\}$ and $\textbf{mis}=\{4,5,6\}$. Throughout this section, we work with the MAR mechanism such that $\x_{\textbf{mis}} \perp \!\!\! \perp R_k |\x_{\textbf{obs}}$ for any $k\in [K]\backslash \{1\}$, which means, given $\x_{\textbf{obs}}$, $\x_{\textbf{mis}}$ is independent with $R_k$.

\begin{theorem} \label{themrem 1}
Suppose the generators $G_2,\cdots,G_K$ and discriminators $D_2,\cdots,D_K$ are the optimal solutions of \Cref{eq: loss}, then each generator $G_k$ learns the true conditional distribution $f(\x_{\textbf{mis}(k)}|\x_{\textbf{obs}(k)})$ under our MAR setting.
\end{theorem}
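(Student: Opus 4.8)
The plan is to split the argument into two independent stages: first, use the theory of Wasserstein GANs to conclude that at the optimum the generator output distribution exactly matches the complete-case distribution; second, translate this distributional match into a statement about conditional distributions and invoke the MAR assumption to replace the complete-case conditional by the true (population) conditional. Since the ensemble loss $L(D_2,G_2,\ldots,D_K,G_K)=\sum_{k=2}^K L(D_k,G_k)$ is a sum of terms each depending on a single pair $(G_k,D_k)$, the minimax problem in \Cref{eq: loss} decouples across $k$, so it suffices to fix one pattern $k$ and analyze its GAN in isolation. For that pattern, the inner maximization over the $1$-Lipschitz discriminator $D_k$ computes the Wasserstein-1 distance $W_1(P_k,P)$ by Kantorovich--Rubinstein duality, so the outer minimization drives the generator to minimize $W_1(P_k,P)$. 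Assuming the generator network $\widehat{G}_k$ is expressive enough to realize any conditional law (the usual noise-outsourcing/universal-approximation idealization), the minimum value $0$ is attainable, and because $W_1$ is a genuine metric on distributions this forces $P_k=P$.

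The first key step is to read off the conditional distribution from $P_k=P$. By construction $G_k$ copies the observed block verbatim, $\widehat{\x}_{\textbf{obs}(k)}=\x_{\textbf{obs}(k)}$ with $\x\sim P$, so the $\textbf{obs}(k)$-marginal of $P_k$ already coincides with that of $P$ by design. Combining equal joints ($P_k=P$) with equal $\textbf{obs}(k)$-marginals yields equal conditionals wherever the marginal density is positive, hence the law of the generated block satisfies $P_k(\x_{\textbf{mis}(k)}\mid\x_{\textbf{obs}(k)})=P(\x_{\textbf{mis}(k)}\mid\x_{\textbf{obs}(k)})$. Since $P$ is the distribution of the \emph{complete} cases, this is exactly the complete-case conditional, i.e.\ the conditional law given the event $\{R_k=0\}$.

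The second key step, and the one that genuinely uses the missingness mechanism, upgrades the complete-case conditional to the true conditional. Writing $\textbf{mis}=A_k\sqcup\textbf{mis}(k)$ with $A_k=\textbf{obs}(k)\cap\textbf{mis}$ so that $\textbf{obs}(k)=\textbf{obs}\sqcup A_k$, the MAR hypothesis $\x_{\textbf{mis}}\perp \!\!\! \perp R_k\mid\x_{\textbf{obs}}$ reads $(\x_{A_k},\x_{\textbf{mis}(k)})\perp \!\!\! \perp R_k\mid\x_{\textbf{obs}}$. Applying the weak-union property of conditional independence then gives $\x_{\textbf{mis}(k)}\perp \!\!\! \perp R_k\mid(\x_{\textbf{obs}},\x_{A_k})=\x_{\textbf{obs}(k)}$, so the conditional law of $\x_{\textbf{mis}(k)}$ given $\x_{\textbf{obs}(k)}$ does not depend on the value of $R_k$. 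In particular the complete-case conditional ($R_k=0$) equals the population conditional $f(\x_{\textbf{mis}(k)}\mid\x_{\textbf{obs}(k)})$, which chained with the previous step shows $G_k$ reproduces the true conditional, as claimed.

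I anticipate that the two delicate points are (i) justifying $W_1(P_k,P)=0\Rightarrow P_k=P$ at the level of rigor the paper wants --- this needs the generator family to be rich enough to attain the infimum and a finite-first-moment assumption so that $W_1$ is a metric rather than merely a lower bound --- and (ii) the measure-theoretic bookkeeping in passing from equal joints-and-marginals to equal conditionals, which requires the relevant conditional densities to be well-defined. The MAR reduction via weak union is conceptually the crux but technically routine once the conditional-independence statement is set up with the correct index sets $\textbf{obs}$, $\textbf{obs}(k)$, $A_k$, and $\textbf{mis}(k)$.
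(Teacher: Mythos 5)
Your proposal is correct and follows essentially the same route as the paper's own proof: WGAN optimality forces $P_k=P$, the generator copies the observed block so equal joints yield equal conditionals, and the MAR assumption upgrades the complete-case conditional to the population conditional $f(\x_{\textbf{mis}(k)}\mid\x_{\textbf{obs}(k)})$. The only substantive difference is that you make explicit the weak-union step needed to pass from conditioning on $\x_{\textbf{obs}}$ to conditioning on $\x_{\textbf{obs}(k)}$ (a step the paper silently absorbs into ``the definition of MAR''), while both arguments equally gloss over the fact that the complete-case event is $\bigcap_{j}\{R_j=0\}$ rather than the single event $\{R_k=0\}$ appearing in the stated independence.
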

\begin{proof}[Proof of \Cref{themrem 1}]
Let $\theta_k$ denote the optimal parameters of the generator $G_k$ and $p_{\theta_k}$ denote the probability density of the imputation distribution $\hat{\x}_{\textbf{mis}(k)}$ conditioned on $\x_{\textbf{obs}(k)}$. When the optimal discriminator is achieved, it perfectly estimates the Wasserstein-1 distance between the distribution of complete cases and the distribution of $G_k$'s outputs (c.f. \cite[Theorem 3]{arjovsky2017wasserstein}). Hence this Wasserstein-1 distance is zero, which means complete case distribution equals to $G_k$'s output distribution $P=P_k$. Denote $\phi$ as some probability density derived from the complete case distribution. Then for all $(t_1,t_2)$, the density function of $G_k$'s output distribution $P_k$ is 
\begin{align*}
    &\phi(\x_{\textbf{obs}(k)}=t_1,\hat{\x}_{\textbf{mis}(k)}=t_2|\cap_{k=2}^K \{R_k=0\})
    \\
    =&\phi(\x_{\textbf{obs}(k)}=t_1|\cap_{k=2}^K \{R_k=0\})p_{\theta_k}(\hat{\x}_{\textbf{mis}(k)}=t_2|\x_{\textbf{obs}(k)}=t_1)
\end{align*}
where the second part of last equation comes from $G_k$'s output depends on its input, and $\cap_{k=2}^K \{R_k=0\}$ represents $R_2=0,\cdots,R_K=0$ are all satisfied. For all $(t_1,t_2)$, the density function of complete case distribution $P$ is
\begin{align*}
    &\phi(\x_{\textbf{obs}(k)}=t_1,\x_{\textbf{mis}(k)}=t_2|\cap_{k=2}^K \{R_k=0\})
    \\
    =&\phi(\x_{\textbf{obs}(k)}=t_1|\cap_{k=2}^K \{R_k=0\})\phi(\x_{\textbf{mis}(k)}=t_2|\x_{\textbf{obs}(k)}=t_1)    
\end{align*}
where the last equation comes from the definition of MAR mechanism: $\x_{\textbf{mis}} \perp \!\!\! \perp R_k |\x_{\textbf{obs}}$.
Therefore, we can conclude
 \begin{align*}
   p_{\theta_k}(\widehat{\x}_{\textbf{mis}(k)}=t_2|\x_{\textbf{obs}(k)}=t_1)=  \phi(\x_{\textbf{mis}(k)}=t_2|\x_{\textbf{obs}(k)}=t_1)
 \end{align*}
which means the optimal $G_k$ learns the true conditional distribution.
\end{proof}

Now we investigate the distribution of imputed samples. by using the optimally trained generator $G_k$ to impute the missing values in the $k$-th pattern. 
\begin{theorem}\label{theorem 2}
Suppose the optimal generators $G_2,\cdots,G_K$ and the discriminators $D_2,\cdots,D_K$ are the optimal solutions of \Cref{eq: loss}, then the imputed incomplete cases in the $k$-th pattern follow the true distribution $f(\x_{\textbf{obs}(k)},\x_{\textbf{mis}(k)}|R_k=1)$. 
\end{theorem}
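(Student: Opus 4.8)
The plan is to build directly on \Cref{themrem 1}, which already establishes that the optimally trained generator $G_k$ reproduces the true conditional density $f(\x_{\textbf{mis}(k)}|\x_{\textbf{obs}(k)})$. The remaining work is to account for the fact that \Cref{themrem 1} learns this conditional from the \emph{complete} cases (the event $\cap_{k=2}^K\{R_k=0\}$), whereas here the missing entries of the \emph{incomplete} cases in the $k$-th pattern (the event $R_k=1$) are imputed, and then to show that gluing the genuine observed part onto the generated missing part reconstitutes the correct joint law.

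First I would make the sampling mechanism of an imputed case explicit. For a case in pattern $k$ the observed coordinates $\x_{\textbf{obs}(k)}$ are the true recorded values, hence distributed as $f(\x_{\textbf{obs}(k)}|R_k=1)$, while the missing coordinates are drawn by the generator from $p_{\theta_k}(\hat{\x}_{\textbf{mis}(k)}|\x_{\textbf{obs}(k)})$. Since the generator output depends only on its input, the joint density of an imputed case factorizes as
\begin{align*}
  f(\x_{\textbf{obs}(k)}|R_k=1)\, p_{\theta_k}(\hat{\x}_{\textbf{mis}(k)}|\x_{\textbf{obs}(k)}),
\end{align*}
and invoking \Cref{themrem 1} I would replace the second factor by the true conditional $f(\x_{\textbf{mis}(k)}|\x_{\textbf{obs}(k)})$ learned from the complete cases.

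Next I would factorize the target density by the chain rule and then apply the MAR assumption $\x_{\textbf{mis}}\perp\!\!\!\perp R_k\mid\x_{\textbf{obs}}$ to strip the $R_k=1$ conditioning from the conditional factor:
\begin{align*}
  f(\x_{\textbf{obs}(k)},\x_{\textbf{mis}(k)}|R_k=1)
  &= f(\x_{\textbf{obs}(k)}|R_k=1)\, f(\x_{\textbf{mis}(k)}|\x_{\textbf{obs}(k)},R_k=1)\\
  &= f(\x_{\textbf{obs}(k)}|R_k=1)\, f(\x_{\textbf{mis}(k)}|\x_{\textbf{obs}(k)}).
\end{align*}
Matching this against the factorization of the imputed-case density term by term then yields the claimed equality of distributions.

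The main obstacle I anticipate is the careful bookkeeping of the MAR assumption, which is stated for the common index sets $\textbf{obs}$ and $\textbf{mis}$ but must be deployed on the pattern-specific sets $\textbf{obs}(k)$ and $\textbf{mis}(k)$, and—crucially—under the event $R_k=1$ rather than the complete-case event $R_k=0$ used in \Cref{themrem 1}. The key point to justify is that the conditional $f(\x_{\textbf{mis}(k)}|\x_{\textbf{obs}(k)})$ is invariant to the value of $R_k$, so that the law estimated on complete cases is exactly the one governing the missing coordinates of the incomplete cases. This invariance is precisely what MAR guarantees, and establishing it cleanly is what makes the glued imputation distributionally correct.
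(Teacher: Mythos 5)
Your proposal is correct and follows essentially the same route as the paper: factor the imputed-case density as the observed-part marginal under $R_k=1$ times the generator's conditional, substitute the true conditional via \Cref{themrem 1}, and use MAR to reattach the $R_k=1$ conditioning and recover $f(\x_{\textbf{obs}(k)},\x_{\textbf{mis}(k)}|R_k=1)$. If anything, you are more explicit than the paper about the step where MAR lets the conditional learned on complete cases govern the incomplete cases, which the paper's chain of equalities uses implicitly.
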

\begin{proof}[Proof of \Cref{theorem 2}]
From \Cref{themrem 1}, we know $$p_{\theta_k}(\widehat{\x}_{\textbf{mis}(k)}=t_2|\x_{\textbf{obs}(k)}=t_1)=  \phi(\x_{\textbf{mis}(k)}=t_2|\x_{\textbf{obs}(k)}=t_1).$$
Hence for all $(t_1,t_2)$,
\begin{align*}
   &\phi(\x_{\textbf{obs}(k)}=t_1,\hat{\x}_{\textbf{mis}(k)}=t_2|R_k=1)\\=&\phi(\x_{\textbf{obs}(k)}=t_1|R_k=1)p_{\theta_k}(\hat{\x}_{\textbf{mis}(k)}=t_2|\x_{\textbf{obs}(k)}=t_1)\\=&\phi(\x_{\textbf{obs}(k)}=t_1|R_k=1)\phi(\x_{\textbf{mis}(k)}=t_2|\x_{\textbf{obs}(k)}=t_1)\\=&\phi(\x_{\textbf{obs}(k)}=t_1,\x_{\textbf{mis}(k)}=t_2|R_k=1)
\end{align*}
where the first term denotes the density of imputed samples' distribution in the $k$-th pattern and the last term denotes the density of the $k$-th pattern sample distribution.   

\end{proof}

\subsection{$\text{MI-GAN}_1$ Algorithm}

In this section, we provide details of $\text{MI-GAN}_1$ algorithm. At the high level, we use an approach similar to that in WGAN with gradient penalty \cite{gulrajani2017improved}, and solve the minimax optimization problem (\Cref{eq: loss}) in an iterative manner.

When training the GAN for the $k$-th pattern missing data, we first optimize the discriminator $D_k$ with a fixed generator $G_k$. Notably, we apply the gradient penalty to the loss function to enforce the Lipschitz constraint \cite{gulrajani2017improved}:
\begin{align*}
     L(D_k) &=\mathbb{E}_{\widehat{\x}\sim P_k}[D_k(\widehat{\x})]-\mathbb{E}_{\x\sim P}[D_k(\x)] \\&+\lambda_1\cdot \mathbb{E}_{\widetilde{\x}}[(\|\nabla_{\widetilde{\x}} D(\widetilde{\x})\|_2-1)^2]
\end{align*}
where $\lambda_1$ is a hyperparameter, $\widetilde{\x}= \epsilon \cdot \x +(1-\epsilon)\cdot \widehat{\x}$ with $\x\sim P$, $\widehat{\x}\sim P_k$ and $\epsilon \sim U[0,1]$. 

Second, we optimize the generator $G_k$ with newly updated discriminator $D_k$. Notice that the output of the generator network $\widehat{G}_k(\x,\z,\m_k)$ is a vector of the same length with $\x$. Moreover, $\text{MI-GAN}_1$ is trained on complete cases such that $\x$ is fully observed. Thus we add a reconstruction error term to the loss function of $G_k$ to encourage $\widehat{G}_k(\x,\z,\m_k)$ to be close to $\x$. Specifically, 
\begin{align*}
    L(G_k) &=\mathbb{E}_{\widehat{\x}\sim P_k}[D_k(\widehat{\x})] \\
    &+\lambda_2 \cdot \mathbb{E}_{\x \sim P,\z\sim \mathcal{N}(0,\mathbf{I})} [\|\x-\widehat{G}_k(\x,\z,\m_k)\|_1] 
\end{align*}
where $\lambda_2$ is a hyperparameter and each element of $\z$ is independently drawn from Gaussian noise $\mathcal{N}(0,1)$. 

After the training process converges, we arrive at the imputation phase. We feed $\X_{P_k,:}$, $\m_k$ and random noise into the well-trained generator $G_k$ to impute the $k$-pattern missing data. Details are presented in \Cref{MI-GAN1}. When conducting multiple imputation, we run \Cref{MI-GAN1} for multiple times to account for the uncertainty of parameters in the imputation models (weights and biases of $G_k$), and combine the imputations with Robin's rule.

\begin{algorithm}[]
\caption{$\text{MI-GAN}_1$: direct imputation}\label{MI-GAN1}
  \textbf{Input:} $K$-pattern missing data $\X$, gradient penalty coefficient $\lambda_1$, reconstruction error penalty $\lambda_2$, initial parameters for the $K-1$ generators $\theta_{G_2},\cdots,\theta_{G_K}$, initial parameters for the $K-1$ discriminators (critics) $\theta_{D_2},\cdots,\theta_{D_K}$, batch size $m$, number of iterations of the critic per generator iteration $n_{\text{critic}}$, Adam hyperparameters $\alpha,\beta_1,\beta_2$ \\
 \textbf{Output:} Imputed matrix
\begin{algorithmic}[1]

\While {training loss has not converged}

\State{\texttt{\#\# Discriminator Optimization}}
\For{\texttt{$j \in \{1,\dots,n_{\text{critic}}\}$}}
\For{\texttt{$i \in \{1,\dots, m\}$}}
\State Sample two complete cases $\x,\x'$ from $\X_{P_1,:}$, sample $(K-1)$ noise vectors $\z_k \sim \mathcal{N}(0,\mathbf{I})$ for $k\in[K]\backslash\{1\}$, and a random number $\epsilon\sim U[0,1]$

\For{\texttt{$k \in \{2,\dots,K\}$}}

\State $\widehat{\x}_k\leftarrow G_k(\x,\z,\m_k)$
\State $\widetilde{\x}_k\leftarrow \epsilon \x' +(1-\epsilon) \widehat{\x}_k$

\State $L^{(i)}_k\leftarrow D_k(\widehat{\x}_k)-D_k(\x')+\lambda_1(\|\nabla_{\widetilde{\x}_k} D(\widetilde{\x}_k)\|_2-1)^2$
\EndFor
\EndFor

\For{\texttt{$k \in \{2,\dots,K\}$}}
\State $\theta_{D_k} \leftarrow \text{Adam}(\nabla_{\theta_{D_k}}\frac{1}{m}\sum_{i=1}^m L^{(i)}_k,\theta_{D_k},\alpha,\beta_1,\beta_2)$
\EndFor

\EndFor

\State{\texttt{\#\# Generator Optimization}}

\For{\texttt{$i \in \{1,\dots, m\}$}}
\State Sample a complete case $\x$ from $\X_{P_1,:}$ and sample $(K-1)$ noise vectors $\z_k \sim \mathcal{N}(0,\mathbf{I})$ for $k\in[K]\backslash\{1\}$

\For{\texttt{$k \in \{2,\dots,K\}$}}
\State $\widehat{\x}_k\leftarrow G_k(\x,\z,\m_k)$

\State $L^{(i)}_k\leftarrow -D_k(\widehat{\x}_k)-\lambda_2\|\x-\widehat{G}_k(\x,\z,\m_k)\|_1$
\EndFor

\EndFor

\For{\texttt{$k \in \{2,\dots,K\}$}}
\State $\theta_{G_k} \leftarrow \text{Adam}(\nabla_{\theta_{G_k}}\frac{1}{m}\sum_{i=1}^m L^{(i)}_k,\theta_{G_k},\alpha,\beta_1,\beta_2)$
\EndFor

\EndWhile

\State{\texttt{\#\# Imputation}}
\For{\texttt{$k \in \{2,\dots,K\}$}}
\For{\texttt{$i \in P_k$}}
\State Draw a noise vector $\z \sim \mathcal{N}(0,\mathbf{I})$
\State $\X_{i,:} \leftarrow G_k(\X_{i,:},\z,\m_k)$

\EndFor
\EndFor

\State Output imputed data matrix $\X$.

\end{algorithmic}
\end{algorithm}

\subsection{$\text{MI-GAN}_2$ Algorithm}

We notice that \Cref{MI-GAN1} only exploits the information contained in complete cases to train the model. When the number of complete cases is relatively small, the training of $\text{MI-GAN}_1$ is challenging and we may not achieve the optimal generators. Hence the imputation can be negatively affected. To overcome this shortage, we propose an iterative training approach whose model is trained on the whole dataset including incomplete cases. The whole process is presented in \Cref{MI-GAN2}. The iterative training approach requires an initial imputation which can be done by \Cref{MI-GAN1} (if complete cases exist) or other imputation methods. Similar to $\text{MI-GAN}_1$, we create one GAN model for each pattern in $\text{MI-GAN}_2$. Then we train a single GAN model and update imputed values for one pattern at each iteration. The newly imputed values are used for the training of the next GAN model in the next iteration. In more details, when updating the imputed values in the $k$-pattern, we train $G_k$ and $D_k$ on the $\X_{P_{-k},:}$. Then we update $\X_{P_k,:}$ with the newly imputed values from the well-trained generator $G_k$ and this updated $\X_{P_k,:}$ is used to update the $(k+1)$-th pattern.

\begin{algorithm}[]
\caption{$\text{MI-GAN}_2$: iterative imputation}\label{MI-GAN2}
  \textbf{Input:} Initial imputation $\X$, imputation times $M$, burn-in period $N$, thinning step $T$, gradient penalty coefficient $\lambda_1$, reconstruction error penalty $\lambda_2$, initial parameters for the $K-1$ generators $\theta_{G_2},\cdots,\theta_{G_K}$, initial parameters for the $K-1$ discriminators (critics) $\theta_{D_2},\cdots,\theta_{D_K}$, batch size $m$, number of iterations of the critic per generator iteration $n_{\text{critic}}$, Adam hyperparameters $\alpha,\beta_1,\beta_2$ \\
 \textbf{Output:} $M$ Imputed matrix
\begin{algorithmic}[1]

\For{\texttt{$s \in \{1,\dots,N+MT\}$}}

\For{\texttt{$k \in \{2,\dots,K\}$}}
\While {training loss has not converged}

\State{\texttt{\#\# Discriminator Optimization}}
\For{\texttt{$t \in \{1,\dots,n_{\text{critic}}\}$}}

\For{\texttt{$i \in \{1,\dots, m\}$}}
\State Sample two cases $\x,\x'$ from $\X_{P_{-k},:}$, draw a noise vector $\z \sim \mathcal{N}(0,\mathbf{I})$, and a random number $\epsilon\sim U[0,1]$

\State $\widehat{\x}_k\leftarrow G_k(\x,\z,\m_k)$
\State $\widetilde{\x}_k\leftarrow \epsilon \x' +(1-\epsilon) \widehat{\x}_k$

\State $L^{(i)}_k\leftarrow D_k(\widehat{\x}_k)-D_k(\x')+\lambda_1(\|\nabla_{\widetilde{\x}} D(\widetilde{\x})\|_2-1)^2$
\EndFor

\State $\theta_{D_k} \leftarrow \text{Adam}(\nabla_{\theta_{D_k}}\frac{1}{m}\sum_{i=1}^m L^{(i)}_k,\theta_{D_k},\alpha,$\\$\beta_1,\beta_2)$
\EndFor

\State{\texttt{\#\# Generator Optimization}}
\For{\texttt{$i \in \{1,\dots, m\}$}}
\State Sample a case $\x$ from $\X_{P_{-k},:}$ and draw a noise vector $\z \sim \mathcal{N}(0,\mathbf{I})$

\State $\widehat{\x}_k\leftarrow G_k(\x,\z,\m_k)$

\State $L^{(i)}_k\leftarrow -D_k(\widehat{\x}_k)-\lambda_2\|\x-\widehat{G}_k(\x,\z,\m_k)\|_1$
\EndFor

\State $\theta_{G_k} \leftarrow \text{Adam}(\nabla_{\theta_{G_k}}\frac{1}{m}\sum_{i=1}^m L^{(i)}_k,\theta_{G_k},\alpha,\beta_1,\beta_2)$

\EndWhile

\State{\texttt{\#\# Imputation}}
\For{\texttt{$i \in P_k$}}
\State Draw a noise vector $\z \sim \mathcal{N}(0,\mathbf{I})$
\State $\X_{i,:} \leftarrow G_k(\X_{i,:},\z,\m_k)$

\EndFor

\EndFor

\If{$s>N$ and $T\big|(s-N)$}
 \State output $\X$ 
 \EndIf

\EndFor

\end{algorithmic}
\end{algorithm}
















\begin{table*}[!htb]
\vspace{-0.08in}
	\centering
	\begin{tabular}{|c|c|c|c|c|c|c|c|c}
	\hline 
	\text{Models} &\text{Style} &\text{Time(s)} & \text{Imp MSE} & \text{Rel Bias}($\hat\beta_1$) & \text{CR($\hat\beta_1$}) & SE($\hat\beta_1$) & SD($\hat\beta_1$) \\ 
	\hline 
	
	SoftImpute &SI &\textbf{7.3} &\textbf{0.020} &-0.091 &0.78  &0.119 &0.162 \\

    GAIN &SI &39.0 &0.868 & 0.625 & 0.18 &0.146 &0.542 \\
    
    Linear RR & SI &3134.7 &0.066 & 0.148 & 1.00 &0.178 &0.101 \\
    
	MICE & MI & 37.6 & \textbf{0.023} & \textbf{-0.006} &\textbf{0.93}  &0.116 &0.121 \\
    	
    Sinkhorn &MI &31.2 &0.075 &\textbf{0.021} &\textbf{0.96} &0.186 &0.163 \\
    
    $\text{MI-GAN}_1$ &MI &\textbf{3.7} &0.066 &\textbf{0.027} &0.91 &0.157 &0.157 \\
    
    $\text{MI-GAN}_2$ &MI & \textbf{8.0} &\textbf{0.056} &\textbf{0.062} &\textbf{0.94} &0.151 &0.145 \\
     
    
	\hline 
    Complete data &-&-&-&\textbf{-0.003} & \textbf{0.93} &0.109 &0.114\\
	Complete case &-&-&-&0.248 & 0.88 &0.340 & 0.330 \\
	ColMean Imp &SI&-& 0.141 & 0.349 &0.72 &0.221 &  0.172\\
	\hline 
	\end{tabular}
\caption{Blockwise missing data with $n=200$ and $p=251$ under MAR. Approximately $\textbf{40\%}$ features and $\textbf{90\%}$ cases contain missing values. Detailed simulation setup information is in \Cref{section: syn}. Good performance is highlighted in bold.}
\label{p250 Gaussian MAR}
\end{table*}

\begin{table*}[!htb]
\vspace{-0.08in}
	\centering
	\begin{tabular}{|c|c|c|c|c|c|c|c|c}
	\hline 
	\text{Models} &\text{Style} &\text{Time(s)} & \text{Imp MSE} & \text{Rel Bias}($\hat\beta_1$) & \text{CR($\hat\beta_1$}) & SE($\hat\beta_1$) & SD($\hat\beta_1$) \\ 
	\hline 
	
	SoftImpute &SI &12.9 &\textbf{0.028} & -0.246 &0.57 &0.137 &0.179 \\

    GAIN &SI &48.5 &0.790 & 0.697 & 0.25 &0.109 &0.727 \\
    
    
	MICE & MI & 32.7 & \textbf{0.026} & \textbf{-0.032} &0.90  &0.118 &0.141 \\
    	
    Sinkhorn &MI &99.8 &0.100 &-0.193 & 0.88 &0.278 &0.326 \\

    $\text{MI-GAN}_1$  &MI &\textbf{3.7} &0.076 &\textbf{-0.004} &0.89 &0.188 &0.227 \\
    
   $\text{MI-GAN}_2$  &MI & \textbf{8.4} &\textbf{0.048} &\textbf{0.025} &\textbf{0.96} &0.147 &0.146 \\

	\hline 
    Complete data &-&-&-&\textbf{-0.007} & \textbf{0.94}  &0.111 &0.114\\
	Complete case &-&-&-&0.244 & 0.88 &0.376 & 0.394 \\
	ColMean Imp &SI&-& 0.135 & 0.050 &0.95 &0.357 & 0.320\\
	\hline 
	\end{tabular}
\caption{Blockwise missing data with $n=200$ and $p=501$ under MAR. Approximately $\textbf{40\%}$ features and $\textbf{91\%}$ cases contain missing values. Detailed simulation setup information is in \Cref{section: syn}. Good performance is highlighted in bold.}
\label{p500 Gaussian MAR}
\end{table*}

\begin{table*}[!htb]
\vspace{-0.08in}
	\centering
	\begin{tabular}{|c|c|c|c|c|c|c|c|c}
	\hline 
	\text{Models} &\text{Style} &\text{Time(s)} & \text{Imp MSE} & \text{Rel Bias}($\hat\beta_1$) & \text{CR($\hat\beta_1$}) & SE($\hat\beta_1$) & SD($\hat\beta_1$) \\ 
	\hline 
	
	SoftImpute &SI &28.7 &\textbf{0.052} &-0.264 &0.62 &0.166 &0.201 \\
    
    GAIN &SI &114.7 &0.762 & 0.906 & 0.28 &0.094 &0.720 \\
    	
    Sinkhorn &MI &147.9 &0.111 &-0.070 & 0.98 &0.342 &0.290 \\

    $\text{MI-GAN}_1$  &MI &\textbf{4.7} &0.099 &\textbf{-0.035} &\textbf{0.95} &0.239 &0.233 \\
    
   $\text{MI-GAN}_2$  &MI & \textbf{12.6} &\textbf{0.060} &\textbf{0.032} &\textbf{0.95} &0.162 &0.162 \\

   \hline 
    Complete data &-&-&-&\textbf{-0.010} & \textbf{0.96}  &0.111 &0.114\\
	Complete case &-&-&-&0.311 & 0.89 &0.442 & 0.462 \\
	ColMean Imp &SI&-& 0.114 & -0.005 &1.00 &0.351 & 0.282\\
	\hline 
	\end{tabular}
\caption{Blockwise missing data with $n=200$ and $p=1501$ under MAR. Approximately $\textbf{40\%}$ features and $\textbf{92\%}$ cases contain missing values. Detailed simulation setup information is in \Cref{section: syn}. Good performance is highlighted in bold.}
\label{p1500 Gaussian MAR}
\end{table*}

\section{Experiments}

In this section, we validate the performance of MI-GANs through extensive synthetic and real data analysis. In all experiments, we not only evaluate the imputation performance but also quantitatively measure the inference ability of MI-GANs with other state-of-the-art imputation methods.
Given incomplete dataset, we first conduct SI or MI. Then we fit a linear regression on each imputed dataset and compare the regression coefficient estimate $\hat{\bm\beta}$ for each method (Rubin's rule \cite{little2019statistical} is used to obtain the final estimate for MI methods). This step is used to evaluate statistical inference performance. Missing data in all experiments are generated from MAR mechanism. 

We compare $\text{MI-GAN}_1$, $\text{MI-GAN}_2$ with 3 benchmarks: \textbf{Complete data} analysis, \textbf{Complete case} analysis, Column mean imputation (\textbf{ColMean Imp}) and 5 other state-of-the-art imputation
methods: \textbf{MICE} \cite{van2007multiple}, \textbf{GAIN} \cite{yoon2018gain}, \textbf{SoftImpute} \cite{mazumder2010spectral}, \textbf{Sinkhorn} \cite{muzellec2020missing}, and \textbf{Linear RR} \cite{muzellec2020missing}. Specifically, complete data analysis assume there are no missing values and directly fit a linear regression on the whole dataset. Hence, complete data analysis represent the best result of an imputation method can possibly achieve. Complete case analysis does not conduct imputation and fit a linear regression only using the complete cases. Column mean imputation is feature-wise mean imputation. Here, the complete case analysis and column mean imputation, two naive methods, are used to benchmark potential bias and loss of information under MAR mechanism.   

All the experiments run on Google Colab Pro with P100 GPU. For GAIN, Sinkhorn, and Linear RR, we use the open-access implementations provided by their authors, with the default or the recommended hyperparameters in their papers. For SoftImpute, the \texttt{lambda} hyperparameter is selected at each run through cross-validation and grid-point search, and we choose \texttt{maxit=500} and \texttt{thresh=1e-05}. For MICE, we use the  \texttt{iterativeImputer} method in the \texttt{scikit-learn} library with default hyperparameters \cite{pedregosa2011scikit}. For $\text{MI-GAN}_1$, we use default values of $\lambda_1=10$, $\lambda_2=0.1$, $m=256$, $n_{\text{critic}}=5$, $\alpha=0.001$, $\beta_1=0.5$, and $\beta_2=0.9$. For $\text{MI-GAN}_2$, we use default values of $N=3$, $T=1$, $\lambda_1=10$, $\lambda_2=0.1$, $m=256$, $n_{\text{critic}}=5$, $\alpha=0.001$, $\beta_1=0.5$, and $\beta_2=0.9$. Both of $\text{MI-GAN}_1$ and $\text{MI-GAN}_2$ use shallow multilayer perceptrons (MLP) for generators and discriminators. Specifically, $G_k$ use a four-layer ($p\times p\times p\times p$) MLP with \texttt{tanh} activation function and $D_k$ use a four-layer ($p\times p\times p\times 1$) MLP with \texttt{ReLU} activation function. MI methods impute missing values for $10$ times except GAIN and Linear RR. We notice that the GAIN implementation from its original authors conducts only SI and that Linear RR is computationally very expensive.

\subsection{Synthetic data experiments}

In the synthetic data analysis, we generate multiple high-dimensional blockwise missing datasets under MAR mechanism and conduct imputations. Experiment details are included in \Cref{section: syn}. For each imputation method, we report six performance metrics: imputation mean squared error (denoted by \textbf{Imp MSE}), the computing time in seconds per imputation (denoted by \textbf{Time(s)}), relative bias of $\hat\beta_1$ (denoted by \textbf{Rel Bias($\hat\beta_1$)}), standard error of $\hat\beta_1$ (denoted by \textbf{SE($\hat\beta_1$)}), coverage rate of the $95\%$ confidence interval for $\hat\beta_1$ (denoted by \textbf{CR($\hat\beta_1$)}), and standard deviation of $\hat\beta_1$ across 100 MC datasets (denoted by \textbf{SD($\hat\beta_1$)}). Here, $\hat\beta_1$ is one regression coefficient estimate obtained by fitting a linear regression on the imputed datasets. The first two metrics, Imp MSE and Time(s), are used to measure the imputation accuracy and computational cost. Another three metrics, Rel Bias($\hat\beta_1$), SE($\hat\beta_1$) and CR($\hat\beta_1$), are used to assess statistical inference performance. Of note, CR($\hat\beta_1$) that is well below the nominal level of $95\%$ would lead to inflated false positives, an important factor contributing to lack of reproducibility in research. Plus, a well-behaved SE($\hat\beta_1$) should be close to SD($\hat\beta_1$) and a lower SE/SD denotes a less loss of information.

We summarize in \Cref{p250 Gaussian MAR} the results over 100 Monte Carlo (MC) datasets on a four-pattern missing data with $n=200$ and $p=251$. MICE, Sinkhorn, and MI-GANs show small relative bias of $\hat\beta_1$. In addition, MICE, Sinkhorn, and $\text{MI-GAN}_2$ yield nearly nominal level of coverage rate for $\hat\beta_1$. Notably, only four MI methods, MICE, Sinkhorn and MI-GANs, show well-behaved standard errors. Among them, MICE presents best performance in terms of information recovery due to smallest SE. Although SoftImpute presents smallest imputation error, it yields poor statistical inference evidenced by large relative bias and well below coverage rate. Similarly, GAIN and Linear RR lead to poor statistical inference. In terms of computational cost, MI-GANs are the most efficient and Linear RR is the most computationally expensive, preventing it to be applicable to higher dimensional settings.

\Cref{p500 Gaussian MAR} summarizes imputation results on a four-pattern missing data with $n=200$ and $p=501$. Since Linear RR costs too much run-time, it is not presented in this table. As we increase the feature size to $501$, Sinkhorn's performance degenerates significantly and MICE's performance also deteriorates in terms of CR($\hat\beta_1$). In this setting, MICE and MI-GANs show small relative bias of $\hat\beta_1$, and only $\text{MI-GAN}_2$ yields nearly nominal level of CR($\hat\beta_1$). We observe that $\text{MI-GAN}_2$ yields much smaller imputation MSE than $\text{MI-GAN}_1$, benefiting from its iterative training. \Cref{p1500 Gaussian MAR} summarizes imputation results on a four-pattern missing data as we further increase the feature size to $1501$. MICE is not presented due to running out of RAM. At the same time, MI-GANs (especially $\text{MI-GAN}_2$) yield satisfactory results in an efficient manner.

\subsection{ADNI data experiments}

In the real data analysis, we further evaluate the performance of MI-GANs on a large-scale Alzheimer’s Disease Neuroimaging Initiative (ADNI) dataset, which includes both imaging and gene expression data. The original dataset contains 649 cases; each case contains more than 19000 features and a continuous response variable --- the VBM right hippocampal volume. After standardizing each feature, 1000 features that we are interested in are selected as the experiment dataset; among them, three features, which have the maximal correlation with the response variable, are selected as predictors for the subsequent linear regression. 

Here we summarize the results over 100 repeats. Experiments details is included in \Cref{section: adni}. Notice that we have no access to the true regression coefficient $\bm\beta$ in real data analysis. Hence we instead report four metrics: \textbf{Imp MSE}, \textbf{Time(s)}, \textbf{$\hat\beta_1$} and \textbf{SE($\hat\beta_1$)} for each method.

\Cref{adni MAR} presents the results for this dataset. MICE and Linear RR are not presented due to running out of RAM. Although SoftImpute yields the smallest imputation MSE, its $\hat\beta_1$ estimate is far away from the golden standard (which is the $\hat\beta_1$ estimate from complete data analysis). Besides GAIN, $\text{MI-GAN}_1$ and $\text{MI-GAN}_2$ yields the $\hat\beta_1$ estimate closest to the golden standard, which shows MI-GANs can lead to good statistical inference. However, GAIN yields much higher and unacceptable imputation MSE than the naive approach, ColMean Imp, indicating that GAIN is not regarded as a good imputation method in this high-dimensional setting. In addition, MI-GANs is the most computationally efficient, compared to all other state-of-the-art methods. Taking all metrics into consideration, MI-GANs are overall the most powerful imputation method on this setting.

\begin{table}[H]

	\centering
	\begin{tabular}{|c|c|c|c|c|c|c|c}
	\hline 
	\text{Models} &\text{Style} &\text{Time(s)} & \text{Imp MSE} & $\hat\beta_1$ Value  & SE($\hat\beta_1$)  \\ 
	\hline

	SoftImpute  &SI &146.8 &\textbf{0.057} &0.027 &0.012 \\

	GAIN &SI &84.4 &1.264 & \textbf{0.018} &0.009 \\

    Sinkhorn &MI &219.2 &0.076 & 0.026 &0.012 \\
   
	$\text{MI-GAN}_1$ &MI &\textbf{6.3} &\textbf{0.063} & 0.025 &0.011 \\
	
	$\text{MI-GAN}_2$ &MI &\textbf{17.3} &\textbf{0.075} &\textbf{0.022} &0.012  \\

	\hline 
   Complete data &-&-&-&\textbf{0.016} &0.008  \\
	Complete case &-&-&-&0.025 & 0.017  \\
    ColMean Imp &SI&-&0.177 & \textbf{0.021} & 0.013 \\
	\hline 
	\end{tabular}
\caption{Real data experiment with $n=649$ and $p=1001$ under MAR. Approximately $\textbf{40\%}$ features and $\textbf{75\%}$ cases contain missing values. Linear RR and MICE are not included as they run out of RAM. Detailed experiment information is in \Cref{section: adni}. Good performance is highlighted in bold.}
\label{adni MAR}
\end{table}

\section{Discussion}
In this work, we propose a novel GAN-based multiple imputation method, MI-GANs, which can handle high-dimensional blockwise missing data with theoretical support under MAR/MCAR mechanism. Our experiments demonstrate that MI-GANs compete with current state-of-the-art imputation methods and outperform them in the sense of statistical inference and computational speed. One limitation of MI-GANs is that GAN's training is challenging and the generators may not converge when the training sample size is too small. One potential research interest is applying graph neural networks in the generators and the discriminators to reduce the number of parameters when the knowledge graph is available. This may help the generators converge to the optimal point and learn the true conditional distribution.

\bibliographystyle{plain}
\bibliography{ref}

\appendix
\subsection{Synthetic data experiments}
\label{section: syn}

Each MC dataset contains $n=200$ samples and each sample contains $p$ features including $(p-1)$ predictors or auxiliary variables $\X=(\x_1,\dots,\x_{p-1})$ and a response variable $y$. Here we consider settings where $p=251$, $p=501$ and $p=1501$. $\X=(\x_1,\dots,\x_{p-1})$ is generated by reordering variables $\mathbf{A}=(\mathbf{a}_1,\dots,\mathbf{a}_{p-1})$. $\mathbf{A}$ is a first order autoregressive model with autocorrelation $\rho=0.9$, white noise $\epsilon \sim \mathcal{N}(0,0.1^2)$ and $\mathbf{a}_1 \sim \mathcal{N}(0,1)$. Given a variable vector $\mathbf{A}$, $\X$ is obtained by firstly moving $a_{5k+4}$ ($k\in\mathbb{N}$) to the right, secondly moving $a_{5k+5}$ ($k\in\mathbb{N}$) to the right (for example, if $p=11$, $(\mathbf{a}_1,\mathbf{a}_{2},\mathbf{a}_{3},\mathbf{a}_{4},\mathbf{a}_{5},\mathbf{a}_{6},\mathbf{a}_{7},\mathbf{a}_{7},\mathbf{a}_{9},\mathbf{a}_{10})$ becomes $\X = (\mathbf{a}_1,\mathbf{a}_{2},\mathbf{a}_{3},\mathbf{a}_{6},\mathbf{a}_{7},\mathbf{a}_{8},\mathbf{a}_{4},\mathbf{a}_{9},\mathbf{a}_{5},\mathbf{a}_{10})$). Given $\X$, the response $\y$ is generated by      
\begin{align}\label{eq: y}
    \y=\beta_1\cdot \x_{q[1]}+ \beta_2\cdot \x_{q[2]}  + \beta_3\cdot \x_{q[3]}  + \mathcal{N}(0,\sigma_1^2)
\end{align} 
where $\beta_i =1$ for $i\in \{1,2,3\}$ and $q$ is the predictor set. For $p=251,501,1501$, the corresponding predictor sets are $\{210,220,230\}$, $\{380,400,420\}$, and $\{1100,1200,1300\}$. Missing values are separately generated in $\{\x_{\frac{3}{5}(p-1)+1},\dots,\x_{\frac{4}{5}(p-1)}\}$ and $\{\x_{\frac{4}{5}(p-1)+1},\dots,\x_{p-1}\}$ from MAR. Specifically, suppose their missing indicators are $\mathbf{R}_1$ and $\mathbf{R}_2$, then     
\begin{align}
\label{eq: MAR indicator1}
 \text{logit}(\mathbb{P}(\mathbf{R}_1=1|\X,\y))
  &= 1 -2 \cdot\frac{5}{3(p-1)}\sum_{j=1}^{3(p-1)/5}\x_{j}  +3\cdot\y \\
  \label{eq: MAR indicator2}
    \text{logit}(\mathbb{P}(\mathbf{R}_2=1|\X,\y))
  &=2\cdot\frac{5}{3(p-1)}\sum_{j=1}^{3(p-1)/5}\x_{j}  -2\cdot\y
\end{align} 
Here $\mathbf{R}_i=1$ indicates the corresponding group of variables is missing.

\subsection{ADNI data experiments}
\label{section: adni}

\subsubsection{Data Availability}
The de-identified ADNI dataset is publicly available at \href{http://adni.loni.usc.edu/}{http://adni.loni.usc.edu/}. 

\subsubsection{Experiment details}
The original large-scale dataset contains 649 samples and each sample contains 19823 features including a response variable ($\y$), the VBM right hippocampal volume. We prepocess features except response $\y$ by removing their means. Then we rearrange these features in the decreasing order of correlation with $\y$ and only select the first 1000 features, namely $\X=(\x_1,\cdots,\x_{1000})$, to analyze. For each repeat of experiment, we randomly generate missing values in two groups: $\{\x_1,\dots,\x_{200}\}$ and $\{\x_{201},\dots,\x_{400}\}$. Their missing indicators $\mathbf{R}_1,\mathbf{R}_2$ are generated from MAR:   
\begin{align*}
    \text{logit}(\mathbb{P}(\mathbf{R}_1=1)) &=-1-\frac{3}{100}\sum_{j=401}^{500}\x_{j}+3\y \\
   \text{logit}(\mathbb{P}(\mathbf{R}_2=1)) &=-1-\frac{3}{100}\sum_{j=601}^{700}\x_{j}+2\y 
\end{align*}
Here $\mathbf{R}_i=1$ indicates the corresponding group of variables is missing. After imputing the missing values, we fit a linear regression $\mathbb{E}[\y|\x_1,\x_2,\x_3] = \beta_0 + \beta_1 \x_1+\beta_2\x_2+\beta_3\x_3$ and analyze the coefficient $\beta_1$.

\end{document}